\title{Recklessly Approximate\\Sparse Coding}
\author{Misha Denil}
\newtheorem{prop}{Proposition}
\renewcommand{\a}{\alpha}
\newcommand{\prox}{\mathrm{prox}}
\newcommand{\R}{\mathbb{R}}
\newcommand{\grad}{\nabla}
\newcommand{\T}{\mathrm{T}}
\newcommand{\soft}{\operatorname{soft}}
\begin{document}

\maketitle
\begin{abstract}                

  It has recently been observed that certain extremely simple feature
  encoding techniques are able to achieve state of the art performance
  on several standard image classification benchmarks including deep
  belief networks, convolutional nets, factored RBMs, mcRBMs,
  convolutional RBMs, sparse autoencoders and several others.
  Moreover, these ``triangle'' or ``soft threshold'' encodings are
  extremely efficient to compute.  Several intuitive arguments have
  been put forward to explain this remarkable performance, yet no
  mathematical justification has been offered.

  The main result of this report is to show that these features are
  realized as an approximate solution to the a non-negative sparse
  coding problem.  Using this connection we describe several variants
  of the soft threshold features and demonstrate their effectiveness
  on two image classification benchmark tasks.




\end{abstract}


\chapter{Introduction}

\section{Setting}

Image classification is one of several central problems in computer
vision.  This problem is concerned with sorting images into categories
based on the objects or types of objects that appear in them.  An
important assumption that we make in this setting is that the object
of interest appears prominently in each image we consider, possibly in
the presence of some background ``clutter'' which should be ignored.
The related problem of object localization, where we predict the
location and extent of an object of interest in a larger image, is not
considered in this report.

Neural networks are a common tool for this problem and have have been applied in
this area since at least the late 80's~\cite{lecun-89d}.  More recently the
introduction of contrastive divergence~\cite{hintonosinderoteh06} has lead to an
explosion of work on neural networks to this task.  Neural network models serve
two purposes in this setting:
\begin{enumerate}
\item They provide a method to design a dictionary of primitives to
  use for representing images.  With neural networks the dictionary
  can be designed through learning, and thus tailored to a specific
  data set.
\item They provide a method to encode images using this dictionary to
  obtain a feature based representation of the image.
\end{enumerate}
Representations constructed in this way can be classified using a standard
classifier such as a support vector machine.  Properly designed feature based
representations can be classified much more accurately than using the raw pixels
directly.  Neural networks have proved to be effective tools for constructing
these representations~\cite{ranzanto2011, hinton2010, mnih2010}.

A major barrier to applying these models to large images is the number
of parameters required.  Designing features for an $n\times n$ image
using these techniques requires learning $O(n^4)$ parameters which
rapidly becomes intractable, even for small images.  A common solution
to this problem is to construct features for representing image patches
rather than full images, and then construct feature representations of
full images by combining representations of their patches.

While much effort has been devoted to designing elaborate feature
learning methods~\cite{ranzato2010a, ranzato2010b, quocle2012,
  goodfellow2012, courville2011, Rifai+al-2011,
  Dauphin-et-al-NIPS2011}, it has been shown recently that provided
the dictionary is reasonable then the encoding method has a far
greater effect on classification performance than the specific choice
of dictionary~\cite{coates2011encoding}.

In particular,~\cite{coates2010analysis} and~\cite{coates2011encoding}
demonstrate two very simple feature encoding methods that outperform a
variety of much more sophisticated techniques.  In the aforementioned
works, these feature encoding methods are motivated based on their
computational simplicity and effectiveness.  The main contribution of
this report is to provide a connection between these features and
sparse coding; in doing so we situate the work
of~\cite{coates2010analysis} and~\cite{coates2011encoding} in a
broader theoretical framework and offer some explanation for the
success of their techniques.

\section{Background}

In~\cite{coates2010analysis} and~\cite{coates2011encoding}, Coates and Ng found
that two very simple feature encodings were able to achieve state of the art
results on several image classification tasks.  In~\cite{coates2010analysis}
they consider encoding image patches, represented as a vector in $x \in \R^N$
using the so called ``K-means'' or ``triangle'' features to obtain a
$K$-dimensional feature encoding $z^{\text{tri}}(x) \in \R^K$, which they
define elementwise using the formula
\begin{align}
  z_k^{\text{tri}}(x) &= \max\{0,\mu(x)-||x-w_k||_2\} \enspace,
  \label{eq:triangle}
\end{align}
where $\{w_k\}_{k=1}^K$ is a dictionary of elements obtained by clustering data
samples with K-means, and $\mu(x)$ is the average of $||x-w_k||_2$ over $k$.  In
~\cite{coates2011encoding} they consider the closely related ``soft threshold''
features, given by
\begin{align}
  \label{eq:st-features}
  z_k^{\text{st}}(x) = \max\{0, w_k^\T x - \lambda \} \enspace,
\end{align}
with $\lambda \ge 0$ as a parameter to be set by cross validation.  These
feature encodings have proved to be surprisingly effective, achieving state of
the art results on popular image classification benchmarks.  However, what makes
these features especially appealing is their simplicity.  Given a dictionary,
producing an encoding requires only a single matrix multiply and threshold
operation.

We note that the triangle and soft threshold features are merely
slight variations on the same idea.  If we modify the triangle
features to be defined in terms of squared distances, that is we
consider
\begin{align*}
  z_k^{\text{tri}}(x) &= \max\{0,\mu_2(x)-||x-w_k||_2^2\} \enspace,
\end{align*}
in place of Equation~\ref{eq:triangle}, with $\mu_2(x)$ taking the
average value of $||x-w_k||_2^2$ over $k$, we can then write
\begin{align*}
  \mu_2(x)-||x-w_k||_2^2 &= 2w_k^\T x - \frac{2}{n}\sum_{i=1}^n w_i^\T
  x - w_k^\T w_k + \frac{1}{n}\sum_{i=1}^n w_i^\T w_i \enspace.
\end{align*}
If we constrain the dictionary elements $w_i$ to have unit norm as
in~\cite{coates2011encoding} then the final two terms cancel and the
triangle features can be rewritten as
\begin{align*}
  z_k^{\text{tri}}(x) &= 2\max\{0, w_k^\T x - \lambda(x)\} \enspace,
\end{align*}
which we can see is just a scaled version of soft threshold features,
where the threshold,
\begin{align*}
  \lambda(x) &= \frac{1}{n}\sum_{i=1}^n w_i^\T x \enspace,
\end{align*}
is chosen as a function of $x$ rather than by cross validation.  In this report
we consider only the soft threshold features, but their similarity with the
triangle features is suggestive.

\section{Related work}

Similar approaches to feature encoding are well known in the computer vision
literature under the name of vector quantization.  In this approach, data are
encoded by hard assignment to the nearest dictionary element. Van Gemert et
al.~\cite{van2008kernel} consider a softer version of this idea and find that
using a kernel function for quantization rather than hard assignment leads to
better performance.  Bourdeau et al.~\cite{Boureau2010} consider a similar soft
quantization scheme but find that sparse coding performs better still.

Following the success of~\cite{coates2010analysis}, the triangle and
soft threshold features (or slight variants thereof) have been applied
in several settings.  Blum et al.~\cite{blum2011} use triangle
features for encoding in their work on applying unsupervised feature
learning to RGB-D data using a dictionary designed by their own
convolutional K-means approach.  Knoll et al.~\cite{KnollF12} apply
triangle features to image compression using PAQ.  An unthresholded
version of triangle features was used in~\cite{savva2011} as the
low-level image features for a system which extracts and redesigns
chart images in documents.

In~\cite{netzer2011} and~\cite{coates2011text} soft threshold features
are used for the detection and recognition of digits and text
(respectively) in natural images.  The same features have also been
employed in~\cite{NIPS2011_1368} as part of the base learning model in
a system for selecting the receptive fields for higher layers in a
deep network.

Jia et al.~\cite{jia12} consider both triangle and soft threshold
features for encoding image patches and investigate the effects of
optimizing the spatial pooling process in order to achieve better
classification accuracy.

The work most similar to that found in this report is the work of Gregor and
LeCun on approximations of sparse coding~\cite{gregor-lecun10}.  Like us, they
are interested in designing feature encoders using approximations of sparse
coding; however, their technique is very different than the one we consider here.

The chief difficulty with sparse coding is that the encoding step
requires solving an $\ell_1$ regularized problem, which does not have
a closed form solution.  For example, if we want to extract features
from each frame of a video in real time then sparse coding is
prohibitively slow.  In~\cite{gregor-lecun10} the authors design
trainable fixed cost encoders to predict the sparse coding features.
The predictor is designed by taking an iterative method for solving
the sparse coding problem and truncating it after a specified number
of steps to give a fixed complexity feed forward predictor.  The
parameters of this predictor are then optimized to predict the true
sparse codes over a training set.

Both our work and that of~\cite{gregor-lecun10} is based on the idea of
approximating sparse coding with a fixed dictionary by truncating an
optimization before convergence, but we can identify some key differences:
\begin{itemize}
\item The method of~\cite{gregor-lecun10} is trained to predict sparse codes on
  a particular data set with a particular dictionary.  Our method requires no
  training and is agnostic to the specific dictionary that is used.
\item We focus on the problem of creating features which lead to good
  classification performance directly, whereas the focus
  of~\cite{gregor-lecun10} is on predicting optimal codes.  Our experiments show
  that, at least for our approach, these two quantities are surprisingly
  uncorrelated.
\item Although there is some evaluation of classification performance of
  truncated iterative solutions without learning in~\cite{gregor-lecun10}, this
  is only done with a coordinate descent based algorithm.  Our experiments
  suggest that these methods are vastly outperformed by truncated proximal
  methods in this setting.
\end{itemize}
Based on the above points, the work in this report can be seen as complimentary
to that of~\cite{gregor-lecun10}.

\section{Structure of this report}

The remainder of this report is structured as follows:
\begin{itemize}
\item In Chapter~\ref{chap:optimization} we give some general background on the
  proximal gradient method in optimization, and introduce some extensions of
  this method that operate in the dual space.
\item In Chapter~\ref{chap:sparse-coding} we introduce sparse coding and outline
  four specific algorithms for solving the encoding problem.
\item Chapter~\ref{chap:approximation} contains the main result of
  this report, which is a connection between the soft threshold
  features and sparse coding through the proximal gradient algorithm.
  Using this connection we outline four possible variants of the soft
  threshold features, based on the sparse encoding algorithms from
  Chapter~\ref{chap:sparse-coding}.
\item In Chapter~\ref{chap:experiments} we report on two experiments designed to
  asses the usefulness of these feature variants.
\item In Chapter~\ref{chap:conclusion} we conclude and offer suggestions for
  future work.
\end{itemize}


\chapter{Optimization}
\label{chap:optimization}

This chapter provides the necessary optimization background to
support the tools used in this report.  Focusing on objective
functions with a specific separable structure, we discuss the proximal
gradient algorithm with a fixed step size and an extension of this
method that uses a Barzilai-Borwein step size scheme.  We also
consider a variant of the proximal gradient algorithm which operates
in the dual space, and an alteration of that method to make it
tractable for the sparse coding problem.

Throughout this chapter we eschew generality in favour of developing
tools which are directly relevant to the sparse coding problem.  All
of the methods we discuss have more general variants, which are
applicable to a broader class of problems then we are concerned with.
The citations in this chapter can be used to find expositions of these
methods in more general settings.

The presentation in this chapter assumes familiarity with some common
optimization tools, specifically the reader should be familiar with
Lagrangian methods and dualization, which are used here without
justification.  Extensive discussions of the supporting theory of the
Lagrangian dual can be found in any standard text on convex
optimization such as~\cite{bert95} or~\cite{bertsekas89}.

\section{Setting}
In what follows, we concern ourselves with the function
\begin{align}
  f(x) = g(x) + h(x) \enspace,
  \label{eq:raw}
\end{align}
where $g : \R^n\to\R$ is differentiable with Lipschitz derivatives,
\begin{align*}
  ||\grad g(x) - \grad g(y)||_2 \le L||x - y||_2 \enspace,
\end{align*}
and $h : \R^n\to\R$ is convex.  In particular we are interested in
cases where $h$ is not differentiable.  The primary instance of that
will concern us in this report is
\begin{align*}
  f(x) = \frac{1}{2}||W z - x||_2^2 + \lambda ||z||_1 \enspace,
\end{align*}
where $g$ is given by the quadratic term and $h$ handles the
(non-differentiable) $\ell_1$ norm.

We search for solutions to
\begin{align}
  \min_{x} f(x)
  \label{eq:unconstrained}
\end{align}
and we refer to Equation~\ref{eq:unconstrained} as the \emph{unconstrained
  problem}.  It will also be useful to us to consider an alternative
formulation,
\begin{align}
  \min_{x=z} g(x) + h(z) \enspace,
  \label{eq:constrained}
\end{align}
which has the same solutions as Equation~\ref{eq:unconstrained}.  Introducing
the dummy variable $z$ gives us access to the dual space which will be useful
later on.  We refer to this variant as the \emph{constrained problem}.

An object of central utility in this chapter is the $\prox_{h,\rho}$ operator,
which is defined, for a convex function $h$ and a scalar $\rho > 0$ as
\begin{align*}
  \prox_{h,\rho}(x) = \arg\min_u\{h(u) + \frac{\rho}{2}||u - x||_2^2\} \enspace.
\end{align*}
We are often interested in the case where $\rho=1$, and write $\prox_h$ in these
cases to ease the notation.

\section{Proximal gradient}
Proximal gradient is an algorithm for solving problems with the form of
Equation~\ref{eq:unconstrained} by iterating
\begin{align}
  \label{eq:proxgrad}
  x^{t+1} &= \prox_{\a h}(x^t - \a\grad g(x^t)) \\
  \nonumber &= \arg\min_u\{\a h(u) + \frac{1}{2}||u - (x^t - \a\grad
  g(x^t))||_2^2\}
\end{align}
for an appropriately chosen step size $\alpha$.  It can be shown that
if $\a < 1/L$ then this iteration converges to an optimal point of
$f$~\cite{Vandenberghe2011}.

Alternative step size selection methods such as line search and
iterate averaging~\cite{beckteboulle09} are also possible.  One such
scheme is the Barzilai-Borwein scheme~\cite{barzborwein88} which picks
the step size $\a^t$ so that $\a^tI$ approximates the inverse Hessian
of $g$.  This choice of step size can be motivated by the form of the
proximal gradient updates.  We consider approximating the solution to
Equation~\ref{eq:unconstrained} using a quadratic model of $g$ with
diagonal covariance $\a^{-1} I$.  Approximating $g$ in this way, with
a Taylor expansion about an arbitrary point $x^0$, leads to the
problem
\begin{align*}
  \arg\min_x \{ h(x) + g(x^0) + \grad g(x^0)^\T(x-x^0) +
  \frac{1}{2\alpha}||x-x^0||_2^2  \} \enspace.
\end{align*}
Some algebra reveals that we can rewrite this problem as follows:
\begin{align*}
  \arg&\min_x \{h(x) + \grad g(x^0)^\T(x-x^0) +
  \frac{1}{2\alpha}||x-x^0||_2^2 \}
  \\
  &= \arg\min_x \{ \a h(x) + \a \grad g(x^0)^\T(x-x^0) +
  \frac{1}{2}||x-x^0||_2^2 \}
  \\
  &= \arg\min_x \{ \a h(x) + \frac{1}{2}x^\T x - x^\T(x^0 - \a \grad
  g(x^0)) + \frac{1}{2} ||x^0 - \a\grad g(x^0)||_2^2 \}
  \\
  &= \arg\min_x \{ \a h(x) + \frac{1}{2} ||x - (x^0 - \a \grad
  g(x^0))||_2^2 \}
  \\
  &= \prox_{\a h}(x^0 - \a\grad g(x^0)) \enspace,
\end{align*}
showing that the proximal gradient update in Equation~\ref{eq:proxgrad} amounts
to minimizing $h$ plus a quadratic approximation of $g$ at each step.  In the
above calculation we have made repeated use of the fact that adding or
multiplying by a constant does not affect the location of the argmax.  The
Barzilai-Borwein scheme adjusts the step size $\alpha^t$ to ensure that the
model of $g$ we minimize is as accurate as possible.  The general step size
selection rule can be found in~\cite{wright09, barzborwein88}.  We will consider
a specific instance of this scheme, specialized to the sparse coding problem, in
Section~\ref{sec:sparsa}.

\section{Dual ascent}
We now consider the constrained problem.  The constraints in
Equation~\ref{eq:constrained} allow us to form the Lagrangian,
\begin{align*}
  L(x,z,y) = g(x) + h(z) + y^\T(x - z) \enspace,
\end{align*}
which gives us access to the dual problem,
\begin{align*}
  \max_y q(y) = \max_y \min_{x,z} L(x,z,y) \enspace.
\end{align*}
It can be shown that if $y^*$ is a solution to the dual problem then $(x^*, z^*)
= \arg\min_{x,z} L(x,z,y^*)$ is a solution to the primal
problem~\cite{bertsekas89}.  Assuming that $q(y)$ is differentiable, this
connection suggests we compute a solution to the primal problem by forming the
sequence
\begin{align*}
  y^{t+1} = y^t + \alpha\grad q(y^t)
\end{align*}
and estimate the values of the primal variables as
\begin{align}
  (x^{t+1}, z^{t+1}) = \arg\min_{x,z} L(x,z,y^t) \enspace.
  \label{eq:dual-ascent-xz}
\end{align}
Forming this estimate at each step requires no extra computation since computing
the update requires, $\grad q(y) = x^{t+1} - z^{t+1}$.  In in our case the
minimization in Equation~\ref{eq:dual-ascent-xz} splits into separate
minimizations in $x$ and $z$,
\begin{align}
  x^{t+1} &= \arg\min_x\{g(x) + (y^t)^\T x\} \enspace, \label{eq:xt-dualascent} \\
  z^{t+1} &= \arg\min_z\{h(z) - (y^t)^\T z\} \enspace. \label{eq:zt-dualascent}
\end{align}
This method is known as dual ascent~\cite{boyd2011}, and can be shown
to converge under certain conditions.  Unfortunately in the sparse
coding problem these conditions are not satisfied.  As we see in
Chapter~\ref{chap:sparse-coding}, we are often interested in problems
where $g$ is minimized on an entire subspace of $\R^n$.  This is
problematic because if the projection of $y$ into this subspace is
non-zero then the minimization in Equation~\ref{eq:xt-dualascent} is
unbounded and $\grad q(y)$ is not well defined.

\section{Method of multipliers}
A tool to help us work around the shortcomings of dual ascent is the Augmented
Lagrangian, which is a family of functions parametrized by $\rho\ge0$,
\begin{align*}
  L_\rho(x,z,y) = g(x) + h(z) + y^\T(x - z) + \frac{\rho}{2}||x-z||_2^2
  \enspace.
\end{align*}
The function $L_\rho$ is the Lagrangian of the problem
\begin{align}
  \min_{x=z} g(x) + h(z) + \frac{\rho}{2}||x-z||_2^2 \enspace,
  \label{eq:augmented}
\end{align}
which we see has the same solutions Equation~\ref{eq:constrained}.  The
quadratic term in the Augmented Lagrangian gives the dual problem nice
behaviour.  The augmented dual is given by
\begin{align*}
  \max_y q_\rho(y) = \max_y \min_{x,z} L_\rho(x,z,y) \enspace.
\end{align*}
We again consider gradient ascent of the objective $q_\rho$ by forming the
sequence
\begin{align}
  y^{t+1} = y^t + \rho \grad q_\rho(y^t) \enspace,
  \label{eq:momy}
\end{align}
where $\grad q_\rho(y^t) = x^{t+1} - z^{t+1}$ with
\begin{align}
  (x^{t+1}, z^{t+1}) &= \arg\min_{x,z} L_\rho(x,z,y^t) \enspace.
  \label{eq:momxz}
\end{align}
This algorithm is known as the method of multipliers.  The quadratic
term in $L_\rho$ ensures that $\grad q_\rho(y)$ always exists and the
algorithm is well defined.  The use of $\rho$ as the step size is
motivated by the fact that it guarantees that the iterates will be
dual-feasible at each step~\cite{boyd2011}.

It can be shown that Equation~\ref{eq:momy} can be written~\cite{bert95}
\begin{align*}
  y^{t+1} &= \arg\min_\lambda \{ -q(\lambda) + \frac{1}{2\rho}||\lambda -
  y^t||_2^2 \} \\
  &= \prox_{-q,1/\rho}(y^t) \enspace,
\end{align*}
where $q$ is the Lagrangian of Equation~\ref{eq:constrained}.
Comparing this to Equation~\ref{eq:momy} shows that in the dual space,
proximal ascent and gradient ascent are equivalent.  The actual
derivation is somewhat lengthy and is not reproduced here (but
see~\cite{bertsekas89} pages 244--245).  To make the connection with
proximal gradient explicit we can write an iterative formula for every
other element of this sequence
\begin{align*}
  y^{t+2} &= \arg\min_\lambda \{ -q(\lambda) + \frac{1}{2\rho}||\lambda -
  (y^t + \rho\grad q(y^t))||_2^2 \} \\
  &= \prox_{-q,1/\rho}(y^t + \rho\grad q(y^t)) \enspace.
\end{align*}

\section{Alternating direction method of multipliers}
The main difficulty we encounter with the method of multipliers is
that for the sparse coding problem, the joint minimization in
Equation~\ref{eq:momxz} is essentially as hard as the original
problem.  We can separate~\ref{eq:momxz} into separate minimizations
over $x$ and $z$ by doing a Gauss-Seidel pass over the two blocks
instead of carrying out the minimization directly.  This modification
leads to the following iteration:
\begin{align}
  x^{t+1} &= \arg\min_x L_\rho(x,z^{t},y^t) \label{eq:admmx} \enspace, \\
  z^{t+1} &= \arg\min_z L_\rho(x^{t+1},z,y^t) \label{eq:admmz} \enspace, \\
  y^{t+1} &= y^t + \rho (x^{t+1} - z^{t+1}) \nonumber \enspace,
\end{align}
where we lose the interpretation as proximal gradient on the dual function since
it is no longer true that $\grad q(y^t) \neq x^{t+1} - z^{t+1}$.  However, since
this method is tractable for our problems we focus on it over the method of
multipliers in the following chapters.


\chapter{Sparse coding}
\label{chap:sparse-coding}

\section{Setting}
Sparse coding~\cite{olshausen96} is a feature learning and encoding
process, similar in many ways to the neural network based methods
discussed in the Introduction.  In sparse coding we construct a
dictionary $\{w_k\}_{k=1}^K$ which allows us to create accurate
reconstructions of input vectors $x$ from some data set.  It can be
very useful to consider dictionaries that are \emph{overcomplete},
where there are more dictionary elements than dimensions; however, in
this case minimizing reconstruction error alone does not provide a
unique encoding.  In sparse coding uniqueness is recovered by asking
the feature representation for each input to be as sparse as possible.

As with the neural network methods from the Introduction, there are
two phases to sparse coding:
\begin{enumerate}
\item A \emph{learning} phase, where the dictionary $\{w_k\}_{k=1}^K$
  is constructed, and
\item an \emph{encoding} phase, where we seek a representation of a
  new vector $x$ in terms of elements of the dictionary.
\end{enumerate}
In this report we focus on the encoding phase, and assume that the dictionary
$\{w_k\}_{k=1}^K$ is provided to us from some external source.  This focus of
attention is reasonable, since it was shown experimentally
in~\cite{coates2011encoding} that as long as the dictionary is constructed in a
reasonable way\footnote{ What exactly ``reasonable'' means in this context is an
  interesting question, but is beyond the scope of this report.  The results
  of~\cite{coates2011encoding} demonstrate that a wide variety of dictionary
  construction methods lead to similar classification performance, but do not
  offer conditions on the dictionary which guarantee good performance.}, then it
is the encoding process that has the most effect on classification performance.
When we want to make it explicit that we are considering only the encoding phase
we refer to the sparse encoding problem.

Formally, the sparse encoding problem can be written as an
optimization.  If we collect the dictionary elements into a matrix $W
= \begin{bmatrix} w_1 \,|\, \cdots \,|\, w_K \end{bmatrix}$ and denote
the encoded vector by $\hat{z}$ we can write the encoding problem as
\begin{align}
  \hat{z} = \arg\min_z \frac{1}{2}||Wz -x||_2^2 + \lambda||z||_1
  \enspace,
  \label{eq:sc}
\end{align}
where $\lambda \ge0$ is a regularization parameter that represents our
willingness to trade reconstruction error for sparsity.  This problem fits in
the framework of Chapter~\ref{chap:optimization} with $g(z) = 1/2||Wz - x||_2^2$
and $h(z) = \lambda||z||_1$.

Often it is useful to consider a non-negative version of sparse coding which
leads to the same optimization as Equation~\ref{eq:sc} with the additional
constraint that the elements of $z$ must be non-negative.  There are a few ways
we can formulate this constraint but the one that will be most useful to us in
the following chapters is to add an indicator function on the positive orthant
to the objective in Equation~\ref{eq:sc},
\begin{align}
  \hat{z} = \arg\min_z \frac{1}{2}||Wz -x||_2^2 + \lambda||z||_1 + \Pi(z)
  \enspace,
  \label{eq:nnsc}
\end{align}
where
\begin{align*}
    \Pi(z) = \begin{cases}
      0 &\text{if }z_i\ge0\quad \forall i \\
      \infty &\text{otherwise} \enspace.
    \end{cases}
\end{align*}

In most studies of sparse coding both the learning and encoding phases of the
problem are considered together.  In these cases, one proceeds by alternately
optimizing over $z$ and $W$ until convergence.  For a fixed $z$, the
optimization over $W$ in Equation~\ref{eq:sc} is quadratic and easily solved.
The optimization over $z$ is the same as we have presented here, but the matrix
$W$ changes in each successive optimization.  Since in our setting the
dictionary is fixed, we need only consider the optimization over $z$.

This difference in focus leads to a terminological conflict with the literature.
Since sparse coding often refers to both the learning and the encoding problem
together, the term ``non-negative sparse coding'' typically refers to a slightly
different problem than Equation~\ref{eq:nnsc}.  In Equation~\ref{eq:nnsc} we
have constrained only $z$ to be non-negative, whereas in the literature
non-negative sparse coding typically implies that both $z$ and $W$ are
constrained to be non-negative, as is done in~\cite{hoyer02}.  We cannot
introduce such a constraint here, since we treat $W$ as a fixed parameter
generated by an external process.

In the remainder of this chapter we introduce four algorithms for
solving the sparse encoding problem.  The first three are instances of
the proximal gradient framework presented in
Chapter~\ref{chap:optimization}.  The fourth algorithm is based on a
very different approach to solving the sparse encoding problem that
works by tracking solutions as the regularization parameter varies.

\section{Fast iterative soft thresholding}
\label{sec:fista}

Iterative soft thresholding (ISTA)~\cite{beckteboulle09} is the name
given to the proximal gradient algorithm with a fixed step size when
applied to problems of the form of Equation~\ref{eq:unconstrained},
when the non-smooth part $h$ is proportional to $||x||_1$.  The name
iterative soft thresholding arises because the proximal operator of
the $\ell_1$ norm is given by the soft threshold function
(see~\cite{murphy2012} $\mathsection$ 13.4.3.1):
\begin{align*}
  \prox_{\lambda||\cdot||_1}(x) = \soft_\lambda(x) =
  \operatorname{sign}(x)\max\{0, |x|-\lambda\} \enspace.
\end{align*}
In the case of sparse encoding this leads to iterations of the form
\begin{align*}
  z^{t+1} = \soft_{\lambda/L}(z^{t} - \frac{1}{L}W^\T(Wz^t - x)) \enspace.
\end{align*}
The constant $L$ here is the Lipschitz constant referred to in the statement of
Equation~\ref{eq:raw} which, in the case of sparse coding, is the largest
eigenvalue of $W^\T W$.  The ``Fast'' variant of iterative soft thresholding (FISTA)
modifies the above iteration to include a specially chosen momentum term,
leading to the following iteration, starting with $y^t = z^0$ and $k^1=1$:
\begin{align}
  z^{t} &= \soft_{\lambda/L}(y^t) \label{eq:fistaz}\enspace, \\
  k^{t+1} &= \frac{1 + \sqrt{1 + 4(k^t)^2}}{2} \label{eq:fistak} \enspace, \\
  y^{t+1} &= z^t + (\frac{k^t - 1}{k^{t+1}})(z^t - z^{t-1}) \label{eq:fistay} \enspace.
\end{align}
The form of the updates in FISTA is not intuative, but can be shown to lead to a
faster convergence rate than regular ISTA~\cite{beckteboulle09}.

\section{Sparse reconstruction by separable approximation}
\label{sec:sparsa}

Sparse reconstruction by separable approximation (SpaRSA)~\cite{wright09} is an
optimization framework designed for handling problems of the form considered in
Chapter~\ref{chap:optimization}.  This framework actually subsumes the ISTA and
FISTA style algorithms discussed above, but we consider a specific instantiation
of this framework which sets the step size using a
Barzilai-Borwein~\cite{barzborwein88} scheme, making it different from the
methods described above.  The development in~\cite{wright09} discusses SpaRSA in
its full generality, but specializing it to the sparse coding problem we get the
following iteration:
\begin{align*}
  z^{t+1} &= \soft_{\lambda/\alpha^t}(x^t - \frac{1}{\alpha^t} W^\T(Wz^t - x))
  \enspace,
  \\
  s^{t+1} &= z^{t+1} - z^t \enspace,
  \\
  \alpha^{t+1} &= \frac{||Ws^{t+1}||_2^2}{||s^{t+1}||_2^2} \enspace.
\end{align*}

The SpaRSA family of algorithms shares an important feature with other
Barzilai-Borwein methods, namely that it does not gaurentee a reduction in the
objective value at each step.  In fact, it has been observed that forcing these
methods to descend at each step (for example, by using a backtracking line
search) can significantly degrade performance in practice~\cite{wright09}.  In
order to guarantee convergence of this type of scheme a common approach is to
force the iterates to be no larger than the largest objective value in some
fixed time window.  This approach allows the objective value to occasionally
increase, while still ensuring that the iterates converge in the limit.

\section{Alternating direction method of multipliers}
\label{sec:admm}

The alternating direction method of multipliers (ADMM)~\cite{boyd2011} was
presented in Chapter~\ref{chap:optimization}.  Its instantiation for the sparse
encoding problem does not have its own name in the literature, but it can be
applied nonetheless.

For the sparse encoding problem the minimizations in Equations~\ref{eq:admmx}
and~\ref{eq:admmz} can be carried out in closed form.  This leads to the
updates:
\begin{align*}
  x^{k+1} &= (W^\T W + \rho I)^{-1}(W^\T x - \rho(z^k - \frac{1}{\rho}y^k))
  \enspace, \\
  z^{k+1} &= \soft_{\lambda/\rho}(x^{k+1} + \frac{1}{\rho}y^k) \enspace, \\
  y^{k+1} &= y^k + \rho(x^{k+1} - z^{k+1}) \enspace.
\end{align*}

\section{Boosted lasso}
\label{sec:blasso}

Boosted Lasso (BLasso)~\cite{zhao07} is a very different approach to
solving the sparse encoding problem than those considered above.
Rather than solving Equation~\ref{eq:sc} directly, BLasso works with
an alternative formulation of the sparse encoding problem,
\begin{align}
  \label{eq:blasso}
  \hat{z} = &\arg\min_z \frac{1}{2}||Wz - x||_2^2 \\
  \nonumber \text{st}&\quad ||z||_1 \le \beta \enspace.
\end{align}
For each value of $\lambda$ in Equation~\ref{eq:sc} there is a
corresponding value of $\beta$ which causes Equation~\ref{eq:blasso}
to have the same solution, although the mapping between values of
$\lambda$ and $\beta$ is problem dependent.  BLasso works by varying
the value of $\beta$ and maintaining a corresponding solution to
Equation~\ref{eq:blasso} at each step.

As the name suggests BLasso draws on the theory of Boosting, which can
be cast as a problem of functional gradient descent on the mixture
parameters of an additive model composed of weak learners.  In this
setting the weak learners are elements of the dictionary and their
mixing parameters are found in $z$.  Similarly to the algorithms
considered above, BLasso starts from the fully sparse solution but
instead of applying proximal iterations, it proceeds by taking two
types of steps: \emph{forward} steps, which decrease the quadratic
term in Equation~\ref{eq:blasso} and \emph{backward} steps which
decrease the regularizer.  In truth, BLasso also only gives exact
solutions to Equation~\ref{eq:blasso} in the limit as the step size
$\epsilon\to0$; however, setting $\epsilon$ small enough can force the
BLasso solutions to be arbitrarily close to exact solutions to
Equation~\ref{eq:blasso}.

BLasso can be used to optimize an arbitrary convex loss function with
a convex regularizer; however, in the case of sparse coding the
forward and backward steps are especially simple.  This simplicity
means that each iteration of BLasso is much cheaper than a single
iteration of the other methods we consider, although this advantage is
reduced by the fact that several iterations of BLasso are required to
produce reasonable encodings.  Another disadvantage of BLasso is that
it cannot be easily cast in a way that allows multiple encodings to be
computed simultaneously.


\chapter{A reckless approximation}
\label{chap:approximation}

\section{Main result}

In this chapter we present the main result of this report, which is a connection
between the soft threshold features discussed in the Introduction, and the
sparse encoding problem.  Our key insight is to show how the soft threshold
features, (as defined in Equation~\ref{eq:st-features}) can be viewed as an
approximate solution to the non-negative sparse encoding problem
(Equation~\ref{eq:nnsc}).

We illustrate this connection through the framework of proximal gradient
minimization.  Using the tools presented in Chapter~\ref{chap:optimization} we
can demonstrate this connection by writing down a proximal gradient iteration
for the sparse encoding problem and computing the value of the first iterate,
starting from an appropriately chosen initial point.  We summarize this result
in a Proposition.

\begin{prop}
  The soft threshold features
  \begin{align*}
    z_k(x) = \max\{0, w_k^\T x - \lambda\}
  \end{align*}
  are given by a single step (of size 1) of proximal gradient descent
  on the non-negative sparse coding objective with regularization
  parameter $\lambda$ and known dictionary $W$, starting from the
  fully sparse solution.
\end{prop}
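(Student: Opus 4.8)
The plan is to instantiate the proximal gradient iteration of Equation~\ref{eq:proxgrad} on the non-negative sparse encoding objective of Equation~\ref{eq:nnsc}, fix the step size to $\a = 1$, start from the point $z^0 = 0$, and show that the first iterate $z^1$ agrees coordinate-wise with the soft threshold features of Equation~\ref{eq:st-features}. The point $z^0 = 0$ is a natural choice for the ``fully sparse solution'' both because it is the sparsest vector and because it is the unique minimizer of the regularizer $h(z) = \lambda\|z\|_1 + \Pi(z)$.

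First I would match the objective to the template $f = g + h$ of Equation~\ref{eq:raw}, taking $g(z) = \frac{1}{2}\|Wz - x\|_2^2$ and $h(z) = \lambda\|z\|_1 + \Pi(z)$; then $g$ is differentiable with $\grad g(z) = W^\T(Wz - x)$, and $h$ is convex as a sum of convex functions. Evaluating the gradient at the starting point gives $\grad g(0) = -W^\T x$, so with $\a = 1$ the proximal gradient update collapses to
\[
  z^1 = \prox_{h}\!\left(0 - \grad g(0)\right) = \prox_{h}(W^\T x) \enspace.
\]

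Next I would evaluate $\prox_h$. Since both $\|\cdot\|_1$ and $\Pi$ decompose across coordinates, so does the proximal operator, and its $k$-th component is the solution of the scalar problem
\[
  \min_{u \ge 0}\ \lambda|u| + \frac{1}{2}\bigl(u - w_k^\T x\bigr)^2 \enspace.
\]
On the feasible set $u \ge 0$ we have $|u| = u$, and the objective is a convex parabola in $u$ with unconstrained minimizer $w_k^\T x - \lambda$; if that value is negative the constrained minimizer is the boundary point $u = 0$, so in all cases the minimizer is $\max\{0,\, w_k^\T x - \lambda\}$. Hence $z^1_k = \max\{0,\, w_k^\T x - \lambda\} = z_k(x)$, which is the assertion.

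I expect the only substantive step to be the computation of $\prox_h$ for $h$ equal to the sum of the $\ell_1$ penalty and the indicator of the positive orthant: one must verify that adjoining $\Pi$ to $\lambda\|\cdot\|_1$ replaces the usual two-sided soft threshold $\soft_\lambda$ by the one-sided ``positive part'' threshold $\max\{0,\, \cdot - \lambda\}$ (which here coincides with clipping $\soft_\lambda$ at zero, but for a reason worth stating explicitly). The gradient evaluation, the choice $\a = 1$, and the identification of $z^0 = 0$ as the fully sparse starting point are all routine.
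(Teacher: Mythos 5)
Your proposal is correct and follows essentially the same route as the paper's own proof: the same splitting $g(z)=\frac{1}{2}\|Wz-x\|_2^2$, $h(z)=\lambda\|z\|_1+\Pi(z)$, the same starting point $z^0=0$ with step size $1$, and the same coordinate-wise reduction of $\prox_h$ to the scalar problem $\min_{u\ge 0}\{\lambda u + \frac{1}{2}(u-w_k^\T x)^2\}$ with minimizer $\max\{0, w_k^\T x-\lambda\}$. Your explicit justification of why adjoining $\Pi$ turns the two-sided soft threshold into the one-sided positive-part threshold is slightly more careful than the paper's, but it is the same argument.
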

\begin{proof}
  Casting the non-negative sparse coding problem in the framework of
  Chapter~\ref{chap:optimization} we have
  \begin{align*}
    \min_z f(z) + g(z)
  \end{align*}
  with
  \begin{align*}
    g(z) &= \frac{1}{2}||Wz-x||_2^2 \enspace, \\
    h(z) &= \lambda||z||_1 + \Pi(z) \enspace.
  \end{align*}
  The proximal gradient iteration for this problem (with $\a=1$) is
  \begin{align*}
    z^{t+1} = \prox_{h}(z^t - W^\T(Wz^t - x)) \enspace.
  \end{align*}
  We now compute $\prox_h(x)$
  \begin{align*}
    u^* = \prox_h(x) &= \arg\min_u \Pi(u) + \lambda||u||_1 + \frac{1}{2}||u -
    x||_2^2 \enspace.
  \end{align*}
  This minimization is separable, and we can write down the solution for each
  element of the result independently:
  \begin{align}
    \label{eq:ustar}
    u_k^* = \arg\min_{u_k\ge 0} \lambda u_k + \frac{1}{2}(u_k - x_k)^2 \enspace.
  \end{align}
  Each minimization is quadratic in $u_k$, and therefore the optimum
  of Equation~\ref{eq:ustar} is given by $u_k^* = \max\{0, u_k^*\}$ where
  $u_k^*=x_k-\lambda$ is the unconstrained optimum.  We set $z^0=0$ and compute
  \begin{align*}
    z_k^1 &= \prox_h(z_k^0 - w_k^\T(Wz^0 - x)) \\
    &= \max\{0, w_k^\T x - \lambda\} \enspace,
  \end{align*}
  which is the desired result.
\end{proof}
Variants of the soft threshold features that appear in the literature
can be obtained by slight modifications of this argument.  For
example, the split encoding used in~\cite{coates2011encoding} can be
obtained by setting $W = \{w_k, -w_k\}$.

Once stated the proof of Proposition~1 is nearly immediate; however,
this immediacy only appears in
hindsight. In~\cite{coates2011encoding}, soft threshold features and
sparse coding features are treated as two separate and competing
entities (see, for example, Figure 1 in~\cite{coates2011encoding}).
In~\cite{NIPS2011_0668}, triangle features and sparse coding are
treated as two separate sparsity inducing objects.

From this Proposition we can draw two important insights:
\begin{enumerate}
\item Proposition~1 provides a nice explanation for the success of
  soft threshold features for classification.  Sparse coding is a well
  studied problem and it is widely known that the features from sparse
  coding models are effective for classification tasks.
\item On the other hand, Proposition~1 tells us that even very
  approximate solutions to the sparse coding problem are sufficient to
  build effective classifiers.
\end{enumerate}
Even optimizers specially designed for the sparse coding problem
typically take many iterations to converge to a solution with low
reconstruction error, yet here we see that a single iteration of
proximal gradient descent is sufficient to give features which have
been shown to be highly discriminative.

These insights open up three questions:
\begin{enumerate}
\item Is it possible to decrease classification error by doing a few
  more iterations of proximal descent?
\item Do different optimization methods for sparse encoding lead to different
  trade-offs between classification accuracy and computation time?
\item To what extent is high reconstruction accuracy a prerequisite to
  high classification accuracy using features obtained in this way?
\end{enumerate}

We investigate the answers to these questions experimentally in
Chapter~\ref{chap:experiments}, by examining how variants of the soft threshold
features preform.  We develop these variants by truncating other proximal
descent based optimization algorithms for the sparse coding problem.  The
remainder of this chapter presents ``one-step'' features from each of the
algorithms presented in Chapter~\ref{chap:sparse-coding}.

\section{Approximate FISTA}

Fast iterative soft thresholding was described in
Section~\ref{sec:fista}.  The first iteration of FISTA is a step of
ordinary proximal gradient descent since the FISTA iteration
(Equations~\ref{eq:fistaz},~\ref{eq:fistak} and~\ref{eq:fistay})
requires two iterates to adjust the step size.

Starting from $z^0=0$ the one step FISTA features are given by
\begin{align*}
  z^1 = \soft_{\lambda/L}(\frac{1}{L}W^\T x) = \frac{1}{L}\soft_{\lambda}(W^\T
  x) \enspace,
\end{align*}
which we see is equivalent to the soft threshold features, scaled by a
factor of $1/L$.

\section{Approximate SpaRSA}

Sparse reconstruction by separable approximation was described in
Section~\ref{sec:sparsa}.  SpaRSA, like FISTA, is an adaptive step
size selection scheme for proximal gradient descent which chooses its
step size based on the previous two iterates.  For the first iteration
this information is not available, and the authors of~\cite{wright09}
suggest a step size of 1 in this case.  This choice gives the
following formula for one step SpaRSA features:
\begin{align*}
  z^1 = \soft_{\lambda}(W^\T x) \enspace,
\end{align*}
which is exactly equivalent to the soft threshold features.

\section{Approximate ADMM}

The alternating direction method of multipliers was described in
Section~\ref{sec:admm}.  Since ADMM operates in the dual space,
computing its iterations requires choosing a starting value for the
dual variable $y^0$.  Since this choice is essentially arbitrary (the
value of $y^0$ does not effect the convergence of ADMM) we choose
$y^0=0$ for simplicity.  This leads to one step ADMM features of the
following form:
\begin{align}
  z^1 = \soft_{\lambda/\rho}((W^\T W + \rho I)^{-1}W^\T x) \enspace.
  \label{eq:oneadmm}
\end{align}
The parameter $\rho$ in the above expression is the penalty parameter
from ADMM.  As long as we are only interested in taking a single step
of this optimization, the matrix $(W^\T W + \rho I)^{-1}W^\T$ can be
precomputed and the encoding cost for ADMM is the same as for soft
threshold features.  Unfortunately, if we want to perform more
iterations of ADMM then we are forced to solve a new linear system in
$W^\T W + \rho I$ at each iteration, although we can cache an
appropriate factorization in order to avoid the full inversion at each
step.

The choice of $y^0=0$ allows us to make some interesting connections between the
one step ADMM features and some other optimization problems.  For instance, if
we consider a second order variant of proximal gradient (by adding a Newton term
to Equation~\ref{eq:proxgrad}) we get the following one step features for the
sparse encoding problem:
\begin{align*}
  z^1 = \soft_\lambda((W^\T W)^{-1}W^\T x)
  \enspace.
\end{align*}
We can thus interpret the one step ADMM features as a smoothed step of
a proximal version of Newton's method.  The smoothing in the ADMM
features is important because in typical sparse coding problems the
dictionary $W$ is overcomplete and thus $W^\T W$ is rank deficient.
Although it is possible to replace the inverse of $W^\T W$ in the
above expression with its pseudoinverse we found this to be
numerically unstable.  The ADMM iteration smooths this inverse with a
ridge term and recovers stability.

Taking a slightly different view of Equation~\ref{eq:oneadmm}, we can
interpret it as a single proximal Newton step on the Elastic
Net objective~\cite{zou05},
\begin{align*}
  \arg\min_z \frac{1}{2}||Wz - x||_2^2 + \rho||z||_2^2 +
  \frac{\lambda}{\rho}||z||_1 \enspace.
\end{align*}
Here the ADMM parameter $\rho$ trades off the magnitude of the
$\ell_2$ term, which smooths the inverse, and the $\ell_1$ term, which
encourages sparsity.

\section{Approximate BLasso}

BLasso was described briefly in Section~\ref{sec:blasso}.  Unlike the
other algorithms we consider, each iteration of BLasso updates exactly
one element of the feature vector $z$.  This means that taking one
step of BLasso leads to a feature vector with exactly one non-zero
element (of magnitude $\epsilon$).  In contrast, the proximal methods
described above update all of the elements of $z$ at each iteration,
meaning that the one step features can be arbitrarily dense.

This difference suggests that comparing one step features from the
other algorithms to one step features from BLasso may not be a fair
comparison.  To accommodate this, in the sequel we use many more steps
of BLasso than the other algorithms in our comparisons.

Writing out the one (or more) step features for BLasso is somewhat
notationally cumbersome so we omit it here, but the form of the
iterations can be found in~\cite{zhao07}.


\chapter{Experiments}
\label{chap:experiments}

\section{Experiment 1}

We evaluate classification performance using features obtained by
approximately solving the sparse coding problem using the different
optimization methods discussed in Chapter~\ref{chap:sparse-coding}.
We report the results of classifying the
CIFAR-10\footnote{\url{http://www.cs.toronto.edu/~kriz/cifar.html}}
and STL-10\footnote{\url{http://www.stanford.edu/~acoates/stl10/}}
data sets using an experimental framework similar
to~\cite{coates2010analysis}.  The images in STL-10 are $96\times96$
pixels, but we scale them to $32\times32$ to match the size of
CIFAR-10 in all of our experiments.  For each different optimization
algorithm we produce features by running different numbers of
iterations and examine the effect on classification accuracy.

\subsection{Procedure}
\label{ssec:experiment-1-procedure}

\subsubsection{Training}

During the training phase we produce a candidate dictionary $W$ for
use in the sparse encoding problem.  We found the following procedure
to give the best performance:
\begin{enumerate}
\item Extract a large library of $6\times 6$ patches from the training
  set.  Normalize each patch by subtracting the mean and dividing by
  the standard deviation, and whiten the entire library using
  ZCA~\cite{bellsejnowski97}.
\item Run K-means with 1600 centroids on this library to produce a
  dictionary for sparse coding.
\end{enumerate}
We experimented with other methods for constructing dictionaries as
well, including using a dictionary built by omitting the K-means step
above and using whitened patches directly.  We also considered a
dictionary of normalized random noise, as well as a smoothed version
of random noise obtained by convolving noise features with a Guassian
filter.  However, we found that the dictionary created using whitened
patches and K-means together gave uniformly better performance, so we
report results only for this choice of dictionary.  An extensive
comparison of different dictionary choices appears
in~\cite{coates2011encoding}.

\subsubsection{Testing}

In the testing phase we build a representation for each image in the
CIFAR-10 and STL-10 data sets using the dictionary obtained during
training.  We build representations for each image using patches as
follows:
\begin{enumerate}
\item Extract $6\times 6$ patches densely from each image and whiten
  using the ZCA parameters found during training.
\item Encode each whitened patch using the dictionary found during
  training by running one or more iterations of each of the algorithms
  from Chapter~\ref{chap:sparse-coding}.
\item For each image, pool the encoded patches in a $2\times2$ grid,
  giving a representation with 6400 features for each image.
\item Train a linear classifier to predict the class label from these
  representations.
\end{enumerate}
This procedure involves approximately solving Equation~\ref{eq:sc} for
each patch of each image of each data set, requiring the solution to
just over $4\times10^7$ separate sparse encoding problems to encode
CIFAR-10 alone, and is repeated for each number of iterations for each
algorithm we consider.  Since iterations of the different algorithms
have different computational complexity, we compare classification
accuracy against the time required to produce the encoded features
rather than against number of iterations.

We performed the above procedure using features obtained with
non-negative sparse coding as well as with regular sparse coding, but
found that projecting the features into the positive orthant always
gives better performance, so all of our reported results use features
obtained in this way.

Parameter selection is performed separately for each algorithm and
data set.  For each algorithm we select both the algorithm specific
parameters, $\rho$ for ADMM and $\epsilon$ for BLasso, as well as
$\lambda$ in Equation~\ref{eq:sc}, in order to maximize classification
accuracy using features obtained from a single step of
optimization.\footnote{For BLasso we optimized classification after 10
  steps instead of 1 step, since 1 step of BLasso produced features
  with extremely poor performance.}  The parameter values we used in
this experiment are shown in Table~\ref{tab:param-values}.

\begin{table}[tbh]
  \centering
    \begin{tabular}{|l|c|c|c|}
      \hline
      \textbf{Algorithm} & \textbf{$\lambda$} & \textbf{$\rho$} & \textbf{$\epsilon$}\\ \hline
      BLasso & -- & -- & 0.25 \\
      FISTA & 0.1 & -- & -- \\
      ADMM & 0.02 & 30 & -- \\
      SPARSA & 0.1 & -- & -- \\
      \hline
    \end{tabular}
    \caption{Parameter values for each algorithm found by optimizing
      for one step classification accuracy (10 steps for BLasso).  Dashes indicate that the
      parameter is not relevant to the corresponding algorithm.}
  \label{tab:param-values}
\end{table}

\subsection{Results}

The results of this experiment on CIFAR-10 are summarized in
Figure~\ref{fig:perf-cifar} and the corresponding results on STL-10
are shown in Figure~\ref{fig:perf-stl}.  The results are similar for
both data sets; the discussion below applies to both CIFAR-10 and
STL-10.

The first notable feature of these results is that BLasso leads to
features which give relatively poor performance.  Although approximate
BLasso is able to find exact solutions to Equation~\ref{eq:sc},
running this algorithm for a limited number of iterations means that
the regularization is very strong.  We also see that for small numbers
of iterations the performance of features obtained with FISTA, ADMM
and SpaRSA are nearly identical.  Table~\ref{tab:accuracy} shows the
highest accuracy obtained with each algorithm on each data set over
all runs.

Another interesting feature of BLasso performance is that, when
optimized to produce one-step features, the other algorithms are
significantly faster than 10 iterations of BLasso.  This is
unexpected, because the iterations of BLasso have very low complexity
(much lower than the matrix-vector multiply required by the other
algorithms).

The reason the BLasso features take longer to compute comes from the
fact that it is not possible to vectorize BLasso iterations across
different problems.  To solve many sparse encoding problems
simultaneously one can replace the vectors $z$ and $x$ in
Equation~\ref{eq:sc} with matrices $Z$ and $X$ containing the
corresponding vectors for several problems aggregated into columns.
We can see from the form of the updates described in
Chapter~\ref{chap:sparse-coding}, that we can replace $z$ and $x$ with
$Z$ and $X$ without affecting the solution for each problem.  This
allows us to take advantage of optimized BLAS libraries to compute the
matrix-matrix multiplications required to solve these problems in
batch.  It is not possible to take advantage of this optimization with
BLasso.

\begin{figure}[tbh]
  \centering
  \includegraphics[width=1.0\linewidth]{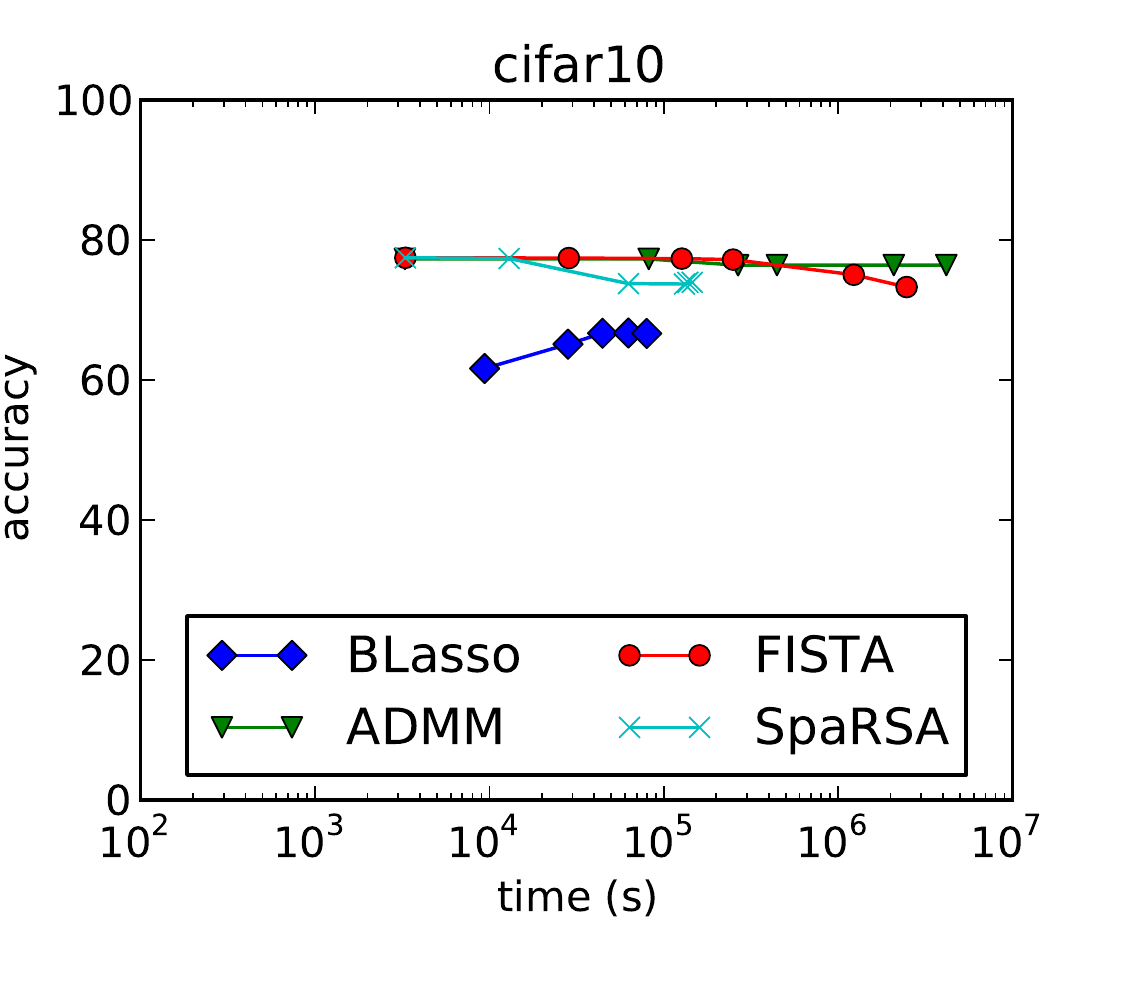}
  \caption{Classification accuracy versus computation time on CIFAR-10
    using different sparse encoding algorithms.  For FISTA, ADMM and
    SpaRSA the markers show performance measured with a budget of 1,
    5, 10, 50, 100 iterations.  The left-most marker on each of these
    lines shows performance using an implementation optimized to
    perform exactly one step of optimization.  The line for BLasso
    shows performance measured with a budget of 10, 50, 200, 500
    iterations.  In all cases early stopping is allowed if a
    termination criterion has been met (which causes some markers to
    overlap).}
  \label{fig:perf-cifar}
\end{figure}

\begin{figure}[tbh]
  \centering
  \includegraphics[width=1.0\linewidth]{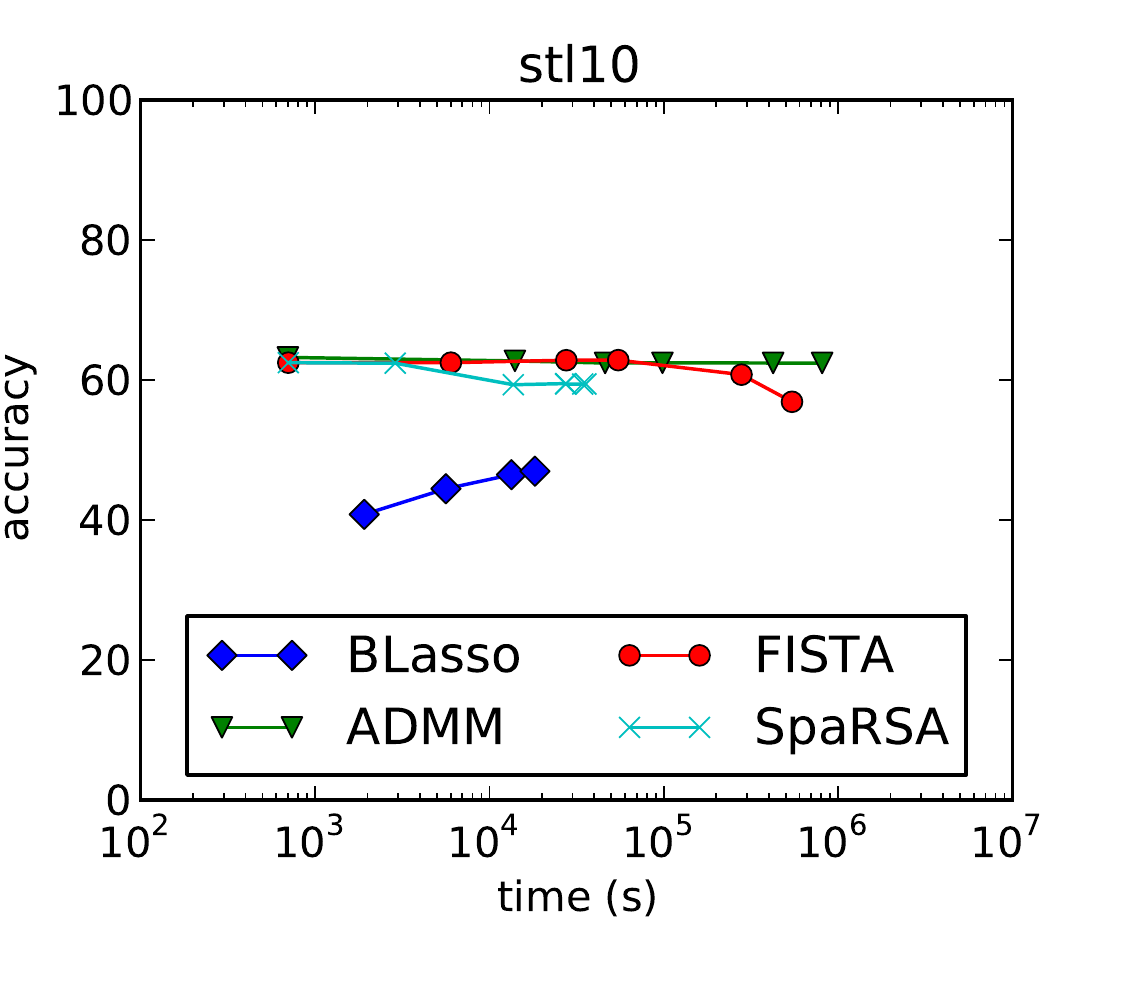}
  \caption{Classification accuracy versus computation time on STL-10
    using different sparse encoding algorithms.  For FISTA, ADMM and
    SpaRSA the markers show performance measured with a budget of 1,
    5, 10, 50, 100 iterations.  The left-most marker on each of these
    lines shows performance using an implementation optimized to
    perform exactly one step of optimization.  The line for BLasso
    shows performance measured with a budget of 10, 50, 200, 500
    iterations.  In all cases early stopping is allowed if a
    termination criterion has been met (which causes some markers to
    overlap).}
  \label{fig:perf-stl}
\end{figure}

The most notable feature of Figures~\ref{fig:perf-cifar}
and~\ref{fig:perf-stl} is that for large numbers of iterations the
performance of FISTA and SpaRSA actually drops below what we see with
a single iteration, which at first blush seems obviously wrong.  It is
important to interpret the implications of these plots carefully.  In
these figures all parameters were chosen to optimize classification
performance for one-step features.  There is no particular reason one
should expect the same parameters to also lead to optimal performance
after many iterations.  We have found that the parameters one obtains
when optimizing for one-step classification performance are generally
not the same as the parameters one gets by optimizing for performance
after the optimization has converged.  It should also be noted that
the parameter constellations we found in Table~\ref{tab:param-values}
universally have a very small $\lambda$ value.  This contributes to
the drop in accuracy we see with FISTA especially, since this
algorithm becomes unstable after many iterations with a small
$\lambda$.

This observation is consistent with the experiments
in~\cite{coates2011encoding} which found different optimal values for
the sparse coding regularizer and the threshold parameter in the soft
threshold features in their comparisons.  It should also be stated
that, as reported in~\cite{coates2011encoding}, the performance of
soft threshold features and sparse coding is often not significantly
different.  We refer the reader to the above cited work for a
discussion of the factors governing these differences.

\begin{table}[tbh]
  \centering

    \begin{tabular}{|l|c|}
      \hline
      \textbf{CIFAR10} & \textbf{Accuracy} \\ \hline
      BLasso & 66.7 \\ \hline
      FISTA & \textbf{77.5} \\  \hline
      ADMM & 77.3 \\ \hline
      SpaRSA & \textbf{77.5} \\ \hline
    \end{tabular}
    \qquad\quad
    \begin{tabular}{|l|c|}
      \hline
      \textbf{STL10} & \textbf{Accuracy} \\ \hline
      BLasso & 47.0 \\ \hline
      FISTA & 62.8 \\  \hline
      ADMM & \textbf{63.2} \\ \hline
      SpaRSA & 62.5 \\ \hline
    \end{tabular}

    \caption{Test set accuracy for the of the best set of parameters found in
      Experiment~1 on CIFAR-10 and STL-10 using features obtained by each different
      algorithm.}
  \label{tab:accuracy}
\end{table}

\section{Experiment 2}

This experiment is designed to answer our third question from
Chapter~\ref{chap:approximation}, relating to the relationship between
reconstruction and classification accuracy.  In this experiment we
measure the reconstruction accuracy of encodings obtained in the
previous experiment.

\subsection{Procedure}

\subsubsection{Training}

The encoding dictionary used for this experiment was constructed using
the method described in Section~\ref{ssec:experiment-1-procedure}.  In
order to ensure that our results here are comparable to the previous
experiment we actually use the same dictionary in both.

\subsubsection{Testing}

In order to measure reconstruction accuracy we use the following
procedure:
\begin{enumerate}
\item Extract a small library of $6\times6$ patches from randomly chosen
  images in the CIFAR-10 training set and whiten using the ZCA
  parameters found during training.
\item Encode each whitened patch using the dictionary found during
  training by running one or more iterations of each of the algorithms
  from Chapter~\ref{chap:sparse-coding}.
\item For each of the encoded patches, we measure the reconstruction
  error $||Wz - x||_2$ and report the mean.
\end{enumerate}

\subsection{Results}

The results of this experiment are shown in
Figure~\ref{fig:reconstruct}.  Comparing these results to the previous
experiment, we see that there is surprisingly little correlation
between the reconstruction error and classification performance.  In
Figure~\ref{fig:perf-cifar} we saw one-step features give the best
classification performance of all methods considered, here we see that
these features also lead to the worst reconstruction.

Another interesting feature of this experiment is that the parameters
we found to give the best features for ADMM actually lead to an
optimizer which makes no progress in reconstruction beyond the first
iteration.  To confirm that this is not merely an artifact of our
implementation we have also included the reconstruction error from
ADMM run with an alternative setting of $\rho=1$ which gives the
lowest reconstruction error of any of our tested methods, while
producing inferior performance in classification.

\begin{figure}[ht]
    \centering
    \includegraphics[width=1.0\textwidth]{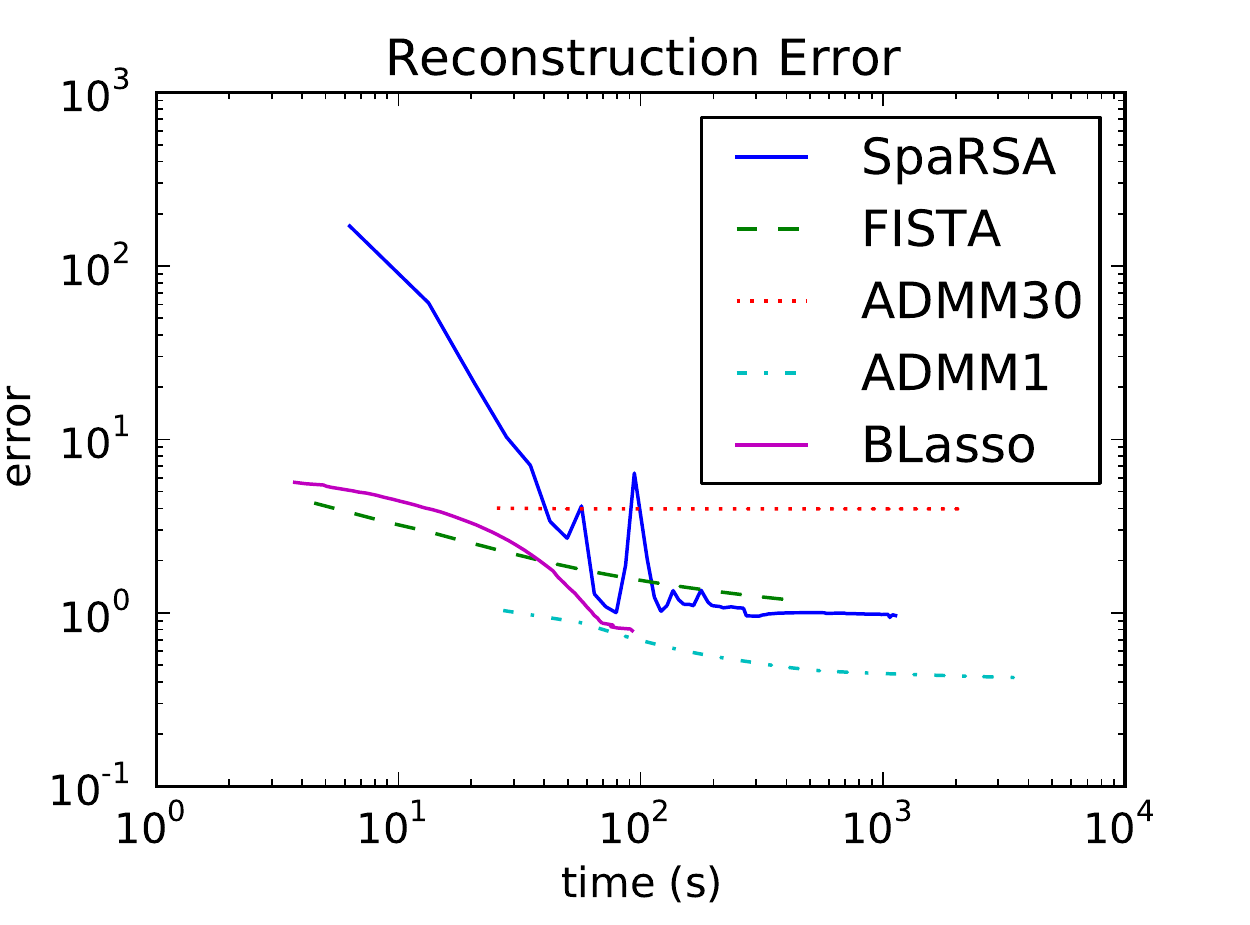}
    \caption{Mean reconstruction error versus computation time on a small sample
      of patches from CIFAR-10.  FISTA, ADMM and SpaRSA were run for 100
      iterations each, while BLasso was run for 500 iterations.  ADMM30
      corresponds to ADMM run with parameters which gave the best one-step
      classification performance.  ADMM1 was run with a different $\rho$
      parameter which leads to better reconstruction but worse classification.}
  \label{fig:reconstruct}
\end{figure}


\chapter{Conclusion}
\label{chap:conclusion}

In this report we have shown that the soft threshold features, which
have enjoyed much success recently, arise as a single step of proximal
gradient descent on a non-negative sparse encoding objective.  This
result serves to situate this surprisingly successful feature encoding
method in a broader theoretical framework.

Using this connection we proposed four alternative feature encoding
methods based on approximate solutions to the sparse encoding problem.
Our experiments demonstrate that the approximate proximal-based
encoding methods all lead to feature representations with very similar
performance on two image classification benchmarks.

The sparse encoding objective is based around minimizing the error in
reconstructing an image patch using a linear combination of dictionary
elements.  Given the degree of approximation in our techniques, one
would not expect the features we find to lead to accurate
reconstructions.  Our second experiment demonstrates that this
intuition is correct.

An obvious extension of this work would be to preform a more thorough empirical
exploration of the interaction between the value of the regularization parameter
$\lambda$ and the degree of approximation.  From our experimentation we can see
only that such an interaction exists, but not glean insight into its structure.
Some concrete suggestions along this line are:
\begin{enumerate}
\item Preform a full parameter search for multi-step feature encodings.
\item Evaluate the variation of performance across different dictionaries.
\end{enumerate}
A full parameter search would make it possible to properly asses the usefulness
of performing more than one iteration of optimization when constructing a
feature encoding.  In this report we evaluate the performance of different
feature encoding methods using a single dictionary; examining the variability in
performance across multiple dictionaries would lend more credibility to the
results we reported in Chapter~\ref{chap:experiments}.

Another interesting direction for future work on this problem is an
investigation of the effects of different regularizers on approximate solutions
to the sparse encoding problem.  The addition of an indicator function to the
regularizer in Equation~\ref{eq:nnsc} appears essential to good performance and
proximal methods, which which we found to be very effective in this setting,
work by adding a quadratic smoothing term to the objective function at each
step.  The connection between one-step ADMM and the Elastic Net is also notable
in this regard.  Understanding the effects of different regularizers
empirically, or better yet having a theoretical framework for reasoning about
the effects of different regularizers in this setting, would be quite valuable.

In this work we have looked only at unstructured variants of sparse
coding.  It may be possible to extend the ideas presented here to the
structured case, where the regularizer includes structure inducing
terms~\cite{gregor2011}.  Some potential launching points for this
are~\cite{jenatton2010} and~\cite{jenatton2011}, where the authors
investigate proximal optimization methods for structured variants of
the sparse coding problem.


\nocite{kriz09}
\bibliographystyle{plain}
\bibliography{master-report}{}

\begin{thebibliography}{10}

\bibitem{barzborwein88}
J.~Barzilai and J.~Borwein.
\newblock Two-point step size gradient methods.
\newblock {\em IMA Journal of Numerical Analysis}, 8, 1988.

\bibitem{beckteboulle09}
A.~Beck and M.~Teboulle.
\newblock A fast iterative shrinkage-threshold algorithm for linear inverse
  problems.
\newblock {\em SIAM Journal on Imaging Sciences}, pages 183--202, 2009.

\bibitem{bellsejnowski97}
A.~Bell and T.~Sejnowski.
\newblock The ``independent components" of natural scenes are edge filters.
\newblock {\em Vision Research}, 37(23):3327--3338, 1997.

\bibitem{bert95}
D.~Bertsekas.
\newblock {\em Nonlinear Programming: Second Edition}.
\newblock Athena Scientific, 1995.

\bibitem{bertsekas89}
D.~Bertsekas and J.~Tsitsiklis.
\newblock {\em Parallel and distributed computation: numerical methods}.
\newblock Prentice-Hall, Inc., 1989.

\bibitem{blum2011}
M.~Blum, J.~Springenberg, W\"ulfing J., and M.~Riedmiller.
\newblock On the applicability of unsupervised feature learning for object
  recognition in {RGB-D} data.
\newblock In {\em NIPS Workshop on Deep Learning and Unsupervised Feature
  Learning}, 2011.

\bibitem{Boureau2010}
{Y-L.} Boureau, F.~Bach, LeCun Y., and J.~Ponce.
\newblock Learning mid-level features for recognition.
\newblock In {\em IEEE Conference on Computer Vision and Pattern Recognition},
  2010.

\bibitem{boyd2011}
S.~Boyd, N.~Parikh, E.~Chu, B.~Peleato, and J.~Eckstein.
\newblock Distributed optimization and statistical learning via the alternating
  direction method of multipliers.
\newblock {\em Foundations and Trends in Machine Learning}, 3(1):1--122, 2011.

\bibitem{coates2011text}
A.~Coates, B.~Carpenter, C.~Case, S.~Satheesh, B.~Suresh, T.~Wang, and A.~Ng.
\newblock Text detection and character recognition in scene images with
  unsupervised feature learning.
\newblock In {\em Proceedings of the 11th International Conference on Document
  Analysis and Recognition}, pages 440--445. IEEE, 2011.

\bibitem{coates2010analysis}
A.~Coates, H.~Lee, and A.~Ng.
\newblock An analysis of single-layer networks in unsupervised feature
  learning.
\newblock In {\em International Conference on Artificial Intelligence and
  Statistics}, 2011.

\bibitem{coates2011encoding}
A.~Coates and A.~Ng.
\newblock The importance of encoding versus training with sparse coding and
  vector quantization.
\newblock In {\em International Conference on Machine Learning}, 2011.

\bibitem{NIPS2011_1368}
A.~Coates and A.~Ng.
\newblock Selecting receptive fields in deep networks.
\newblock In J.~Shawe-Taylor, R.S. Zemel, P.~Bartlett, F.C.N. Pereira, and K.Q.
  Weinberger, editors, {\em Advances in Neural Information Processing Systems
  24}, pages 2528--2536. 2011.

\bibitem{courville2011}
A.~Courville, J.~Bergstra, and Y.~Bengio.
\newblock A spike and slab restricted {B}oltzmann machine.
\newblock In {\em Proceedings of the Fourteenth International Conference on
  Artificial Intelligence and Statistics}, volume~15, 2011.

\bibitem{goodfellow2012}
I.~Goodfellow, A.~Courville, and Y.~Bengio.
\newblock Large-scale feature learning with spike-and-slab sparse coding.
\newblock In {\em International Conference on Machine Learning}, 2012.

\bibitem{gregor-lecun10}
K.~Gregor and Y.~LeCun.
\newblock Learning fast approximations of sparse coding.
\newblock In {\em Proceedings of the International Conference on Machine
  Learning}, 2010.

\bibitem{gregor2011}
K.~Gregor, A.~Szlam, and Y.~LeCun.
\newblock Structured sparse coding via lateral inhibition.
\newblock In {\em Advances in Neural Information Processing Systems},
  volume~24, 2011.

\bibitem{hinton2010}
G.~Hinton.
\newblock Learning to represent visual input.
\newblock {\em Philosophical Transactions of the Royal Society, B},
  365:177--184, 2010.

\bibitem{hintonosinderoteh06}
G.~Hinton, S.~Osindero, and Y.~Teh.
\newblock A fast learning algorithm for deep belief nets.
\newblock {\em Neural Computation}, 18:1527--1554, 2006.

\bibitem{hoyer02}
P.~Hoyer.
\newblock Non-negative sparse coding.
\newblock In {\em IEEE Workshop on Neural Networks for Signal Processing},
  pages 557--565, 2002.

\bibitem{jenatton2010}
R.~Jenatton, J.~Mairal, G.~Obozinski, and Bach F.
\newblock Proximal methods for sparse hierarchical dictionary learning.
\newblock In {\em Proceedings of the International Conference on Machine
  Learning}, 2010.

\bibitem{jenatton2011}
R.~Jenatton, J.~Mairal, G.~Obozinski, and Bach F.
\newblock Proximal methods for hierarchical sparse coding.
\newblock {\em Journal of Machine Learning Research}, 12:2297--2334, 2011.

\bibitem{jia12}
T.~Jia, C.~Huang, and T.~Darrell.
\newblock Beyond spatial pyramids: Receptive field learning for pooled image
  features.
\newblock In {\em Computer Vision and Pattern Recognition}, 2012.

\bibitem{KnollF12}
B.~Knoll and N.~de~Freitas.
\newblock A machine learning perspective on predictive coding with {PAQ8}.
\newblock In {\em Data Compression Conference}, pages 377--386, 2012.

\bibitem{kriz09}
A.~Krizhevsky.
\newblock Learning multiple layers of features from tiny images.
\newblock Technical report, University of Toronto, 2009.

\bibitem{quocle2012}
Q.~Le, M.~Ranzato, R.~Monga, M.~Devin, G.~Corrado, K.~Chen, Dean J., and A.~Ng.
\newblock Building high-level features using large scale unsupervised learning.
\newblock In {\em International Conference of Machine Learning}, 2012.

\bibitem{lecun-89d}
Y.~LeCun, L.~Jackel, B.~Boser, J.~Denker, H.~Graf, I.~Guyon, D.~Henderson,
  R.~Howard, and W.~Hubbard.
\newblock Handwritten digit recognition: Applications of neural net chips and
  automatic learning.
\newblock In F.~Fogelman, J.~Herault, and Y.~Burnod, editors, {\em
  Neurocomputing, Algorithms, Architectures and Applications}, Les Arcs,
  France, 1989. Springer.

\bibitem{mnih2010}
V.~Mnih and G.~Hinton.
\newblock Learning to detect roads in high-resolution aerial images.
\newblock In {\em European Conference on Computer Vision}, 2010.

\bibitem{murphy2012}
K.~Murphy.
\newblock {\em Machine Learning a Probabilistic Perspective}.
\newblock MIT Press, 2012.

\bibitem{netzer2011}
Y.~Netzer, T.~Wang, A.~Coates, A.~Bissacco, B.~Wu, and A.~Ng.
\newblock Reading digits in natural images with unsupervised feature learning.
\newblock In {\em NIPS Workshop on Deep Learning and Unsupervised Feature
  Learning}, 2011.

\bibitem{NIPS2011_0668}
J.~Ngiam, P.~Koh, Z.~Chen, S.~Bhaskar, and A.~Ng.
\newblock Sparse filtering.
\newblock In J.~Shawe-Taylor, R.S. Zemel, P.~Bartlett, F.C.N. Pereira, and K.Q.
  Weinberger, editors, {\em Advances in Neural Information Processing Systems
  24}, pages 1125--1133. 2011.

\bibitem{olshausen96}
B.~Olshausen and D.~Field.
\newblock Emergence of simple-cell receptive field properties by learning a
  sparse code for natural images.
\newblock {\em Nature}, 381(6583):607--609, 1996.

\bibitem{ranzato2010a}
M.~Ranzato and G.~Hinton.
\newblock Modelling pixel means and covariances using factored third-order
  boltzmann machines.
\newblock In {\em IEEE Conference on Computer Vision and Pattern Recognition},
  2010.

\bibitem{ranzato2010b}
M.~Ranzato, A.~Krizhevsky, and Hinton G.
\newblock Factored 3-way restricted boltzmann machines for modelling natural
  images.
\newblock In {\em International Conference on Artificial Intelligence and
  Statistics}, 2010.

\bibitem{ranzanto2011}
M.~Ranzato, J.~Susskind, V.~Mnih, and G.~Hinton.
\newblock On deep generative models with applications to recognition.
\newblock In {\em IEEE Conference on Computer Vision and Pattern Recognition},
  2011.

\bibitem{Dauphin-et-al-NIPS2011}
S.~Rifai, Y.~Dauphin, P.~Vincent, Y.~Bengio, and X.~Muller.
\newblock The manifold tangent classifier.
\newblock In {\em NIPS'2011}, 2011.
\newblock Student paper award.

\bibitem{Rifai+al-2011}
S.~Rifai, P.~Vincent, X.~Muller, X.~Glorot, and Y.~Bengio.
\newblock Contracting auto-encoders: Explicit invariance during feature
  extraction.
\newblock In {\em Proceedings of the Twenty-eight International Conference on
  Machine Learning (ICML'11)}, June 2011.

\bibitem{savva2011}
M.~Savva, N.~Kong, A.~Chhajta, L.~Fei-Fei, M.~Agrawala, and J.~Heer.
\newblock Re{V}ision: Automated classification, analysis and redesign of chart
  images.
\newblock In {\em Proceedings of the 24th anual Symposium on User Interface
  Software and Technology}, 2011.

\bibitem{van2008kernel}
J.~van Gemert, J.~Geusebroek, C.~Veenman, and A.~Smeulders.
\newblock Kernel codebooks for scene categorization.
\newblock {\em Computer Vision--ECCV 2008}, pages 696--709, 2008.

\bibitem{Vandenberghe2011}
L.~Vandenberghe.
\newblock {EE}236c -- optimization methods for large-scale systems, 2011.

\bibitem{wright09}
S.~Wright, R.~Nowak, and M.~Figueiredo.
\newblock Sparse reconstruction by separable approximation.
\newblock {\em IEEE Transactions on Signal Processing}, 57(7):2479--2493, 2009.

\bibitem{zhao07}
P.~Zhao and B.~Yu.
\newblock Boosted lasso.
\newblock {\em Journal of Machine Learing Research}, 8(Dec):2701--2726, 2007.

\bibitem{zou05}
H.~Zou and T.~Hastie.
\newblock Regularization and variable selection via the elastic net.
\newblock {\em Journal of the Royal Statistical Society: Series B},
  67(2):301--320, 2005.

\end{thebibliography}

\end{document}